\documentclass[11pt, a4paper, logo, copyright]{deepmind}

\usepackage{hyperref}       
\usepackage{url}            
\usepackage{booktabs}       
\usepackage{amsfonts}       
\usepackage{amsthm}         
\usepackage{nicefrac}       
\usepackage{microtype}      

\usepackage{xcolor}           
\usepackage{graphicx} 
\usepackage{amsmath}          
\usepackage{amssymb}          
\usepackage{wrapfig}          
\usepackage{lipsum}           
\usepackage{booktabs}         
\usepackage[noend]{algpseudocode}  
\usepackage[sort&compress]{natbib}
\bibliographystyle{abbrvnat}


\newcommand{\note}[2]{}

\newtheorem{lemma}{Lemma}

\newcommand{\comment}[1]{}

\algnewcommand{\IIf}[1]{\State\algorithmicif\ #1\ \algorithmicthen}
\algnewcommand{\IElseIf}[1]{\State\algorithmicelse\ \algorithmicif\ 
  #1\ \algorithmicthen}
\algnewcommand{\IElse}{\State\algorithmicelse\ }
\algnewcommand{\EndIIf}{\unskip\ \algorithmicend\ \algorithmicif}


\newcommand{\reals}{\mathbb{R}}



\title{Model-Free Risk-Sensitive Reinforcement Learning}

\correspondingauthor{pedroortega@deepmind.com}

\paperurl{Technical Report}

\reportnumber{001} 


\author[*]{Gr\'egoire Del\'etang}
\author[*]{Jordi Grau-Moya}
\author[*]{Markus Kunesch}
\author[*]{Tim Genewein}
\author[*]{Rob Brekelmans}

\author[*]{Shane Legg}
\author[*]{Pedro A. Ortega}

\affil[*]{Deepmind Safety Analysis}

\begin{abstract}
We extend temporal-difference (TD) learning in order to obtain risk-sensitive, model-free reinforcement learning algorithms. This extension can be regarded as modification of the Rescorla-Wagner rule, where the (sigmoidal) stimulus is taken to be either the event of over- or underestimating the TD target. As a result, one obtains a stochastic approximation rule for estimating the free energy from i.i.d.\ samples generated by a Gaussian distribution with unknown mean and variance. Since the Gaussian free energy is known to be a certainty-equivalent sensitive to the mean and the variance, the learning rule has applications in risk-sensitive decision-making. 
\end{abstract}

\keywords{reinforcement learning, model-free, risk-sensitivity, free energy}

\begin{document}

\maketitle

\section{Introduction}\label{sec:introduction}

Risk-sensitivity, the susceptibility to the higher-order moments of the return, is necessary for the real-world deployment of AI agents. Wrong assumptions, lack of data, misspecification, limited computation, and adversarial attacks are just a handful of the countless sources of unforeseen perturbations that could be present at deployment time. Such perturbations can easily destabilize risk-neutral policies, because they only focus on maximizing expected return while entirely neglecting the variance. This poses serious safety concerns~\citep{russell2015research, amodei2016concrete, leike2017ai}.

Risk-sensitive control has a long history in control theory \citep{coraluppi1997optimal} and is an active area of research within reinforcement learning (RL). There are multiple different approaches to risk-sensitivity in RL: for instance in \emph{Minimax RL}, inspired by classical robust control theory, one derives a conservative worst-case policy over MDP parameter intervals \citep{nilim2005robust, tamar2014scaling}; and the more recent \emph{CVaR approach} relies on using the conditional-value-at-risk as a robust performance measure \citep{galichet2013exploration, cassel2018general}. We refer the reader to \citet{garcia2015comprehensive} for a comprehensive overview. Here we focus on one of the earliest and most popular approaches (see references), consisting of the use of exponentially-transformed values, or equivalently, the free energy as the risk-sensitive certainty-equivalent \citep{bellman1957dynamic, howard1972risk}. 

\nocite{kappen2005path, todorov2007linearly, ziebart2008maximum, toussaint2009robot, theodorou2010generalized, kappen2012optimal, tishby2011information, ortega2011information, ortega2013thermodynamics}

The certainty-equivalent of a stochastic value $X \in \reals$ is defined as the representative deterministic value $v \in \reals$ that a decision-maker uses as a summary of~$X$ for valuation purposes. To illustrate, consider a first-order Markov chain over discrete states $\mathcal{S}$ with transition kernel $P(s'|s)$, state-emitted rewards $R(s) \in \reals$, and discount factor $\gamma \in [0, 1)$. Such a process could for instance be the result of pairing a Markov Decision Process with a stationary policy. Typically RL methods use the expectation as the certainty-equivalent of stochastic transitions \citep{bertsekas1995neuro, sutton2018reinforcement}. Therefore they compute the value $V(s)$ of the current state $s \in \mathcal{S}$ by (recursively) aggregating the future values through their expectation, e.g.\
\begin{equation}\label{eq:risk-neutral}
 V(s) = \int P(s'|s) \{R(s) + \gamma V(s')\} \, ds'.
\end{equation}
Instead, \citet{howard1972risk} proposed using the free energy as the certainty-equivalent, that is,
\begin{equation}\label{eq:risk-sensitive}
 V(s) = F_\beta(s) = \frac{1}{\beta} \log \int P(s'|s) \exp\bigl\{ 
    \beta [R(s) + \gamma V(s')] \bigr\} \, ds',
\end{equation}
where $\beta \in \reals$ is the inverse temperature parameter which determines whether the aggregation is risk-averse ($\beta < 0$), risk-seeking ($\beta > 0$), or even risk-neutral as a special case ($\beta = 0$). Indeed, if the future values are bounded, then $F_\beta(s)$ is sigmoidal in shape as a function of~$\beta$, with three special values given by
\begin{equation}\label{eq:limits}
  \lim_\beta F_\beta(s) = 
  \begin{cases}
  \min_{s'} \{R(s) + \gamma V(s')\} &\text{if $\beta \rightarrow -\infty$;}\\
  \mathbf{E}[ R(S) + \gamma V(S') | S=s ] &\text{if $\beta \rightarrow \phantom{0}0$;}\\
  \max_{s'} \{R(s) + \gamma V(s')\} &\text{if $\beta \rightarrow +\infty$.}
  \end{cases}
\end{equation}
These limit values highlight the sensitivity to the higher-order moments of the return. Because of this property, the free energy has been used as the certainty-equivalent for assessing the value of both actions and observations under limited control and model uncertainty respectively, each effect having their own inverse temperature. The work by \citet{grau2016planning} is a demonstration of how to incorporate multiple types of effects in MDPs.  

The present work addresses a longstanding problem pointed out by \citet{mihatsch2002risk}. An advantage of using expectations is that certainty-equivalents such as~\eqref{eq:risk-neutral} are easily estimated using stochastic approximation schemes. For instance, consider the classical Robbins-Monro update \citep{robbins1951stochastic}
\begin{equation}\label{eq:robbins-monro}
  v \leftarrow v + \alpha \cdot (x-v)
\end{equation}
where $x \sim P(x)$ is a stochastic target value, $\alpha$ is a learning rate, and $v$ is the estimate of $\mathbf{E}[X]$. Substituting $x = R(s) + \gamma V(s')$ and $v=V(s)$ leads to the popular TD(0) update \citep{sutton1990time}:
\begin{equation}\label{eq:td-learning}
  V(s) \leftarrow V(s) + \alpha (R(s) + \gamma V(s') - V(s)).
\end{equation}
However, there is no model-free counterpart for estimating free energies~\eqref{eq:risk-sensitive} under general unknown distributions. The difficulty lies in that model-free updates rely on single (Monte-Carlo) unbiased samples, but these are not available in the case of the free energy due to the log-term on the r.h.s.\ of~\eqref{eq:risk-sensitive}. This shortcoming led \citet{mihatsch2002risk} to propose the alternative risk-sensitive learning rule
\begin{equation}\label{eq:rs-rl}
    v \leftarrow v + \alpha \cdot u \cdot (x-v),
    \qquad\text{where } u =
    \begin{cases}
    (1 - \kappa) & \text{if }(x-v) \geq 0\\
    (1 + \kappa) & \text{if }(x-v) < 0
    \end{cases}
\end{equation}
and where $\kappa \in [-1; 1]$ is a risk-sensitivity parameter. While the heuristic~\eqref{eq:rs-rl} does produce risk-sensitive policies, these have no formal correspondence to free energies.

\paragraph{Contributions.}
Our work contributes a simple model-free rule for risk-sensitive reinforcement learning. Starting from the Rescorla-Wagner rule 
\begin{equation}\label{eq:rescorla-wagner}
    v \leftarrow v + \alpha \cdot u \cdot (x-v),
\end{equation}
where $u \in \{0, 1\}$ is an indicator function marking the presence of a stimulus \citep{rescorla1972theory}, we substitute~$u$ by twice the soft-indicator function $\sigma_\beta(x-v)$ of \eqref{eq:logistic-sigmoid}, which activates whenever~$v$ either over- or underestimates the target value $x$, depending on the sign of the risk-sensitivity parameter~$\beta$. Using the substitutions appropriate for RL, we obtain the risk-sensitive TD(0)-rule
\begin{equation}\label{eq:rs-td-learning}
  V(s) \leftarrow V(s) 
    + 2\alpha \cdot \sigma_\beta(\delta) \cdot \delta,
\end{equation}
where $\delta = R(s) + \gamma V(s') - V(s)$ is the standard temporal-difference error. We show the following surprising result: in the special case when the target $R(s)+\gamma V(s')$ is Gaussian, then the fixed point of this rule is precisely equal to the free energy. 

The learning rule is trivial to implement, works as stated for tabular RL, and is easily adapted to the objective functions of deep RL methods \citep{mnih2015human}. The learning rule is also consistent with findings in computational neuroscience \citep{niv2012neural}, e.g.\ predicting asymmetric updates that are stronger for negative prediction errors in the risk-averse case \citep{gershman2015learning}. 

\section{Analysis of the Learning Rule}\label{sec:analysis}

Let $N(x; \mu, \rho) = \sqrt{\frac{\rho}{2\pi}}\exp\{-\frac{\rho}{2}(x-\mu)^2\}$ be the Gaussian pdf with mean $\mu$ and precision $\rho$. Given a sequence $x_1, x_2, \ldots$ of i.i.d.\ samples drawn from $N(x; \mu, \rho)$ with unknown $\mu$ and $\rho$, consider the problem of estimating the free energy $\mathbf{F}_\beta$ for a given inverse temperature $\beta \in \mathbb{R}$, that is
\begin{equation}
  \label{eq:free-energy}
  \mathbf{F}_\beta 
  = \frac{1}{\beta} \log \int_{\reals} N(x; \mu, \rho) \exp\{\beta x\}\, dx 
  = \mu + \frac{\beta}{2\rho}.
\end{equation}
We show that \eqref{eq:free-energy} can be estimated using the following stochastic approximation rule. If $v \in \mathbb{R}$ is the current estimate and a new sample $x$ arrives, update~$v$ according to
\begin{equation}
  \label{eq:learning-rule}    
  v \leftarrow v + 2 \alpha \cdot \sigma_\beta(x-v) \cdot (x-v),
\end{equation}
where $\alpha>0$ is a learning rate and  $\sigma_\beta(z)$ is the scaled logistic sigmoid
\begin{equation}
  \label{eq:logistic-sigmoid}
  \sigma_\beta(z) = \frac{1}{1 + \exp\{-\beta z\}}.
\end{equation}
The next lemma shows that the unique and stable fixed point of the learning rule~\eqref{eq:learning-rule} is equal to the desired free energy value $v^\ast = \mu + \frac{\beta}{2\rho}$.

\begin{lemma}\label{lemma:equilibrium}
If $x_1, x_2, \ldots$ are i.i.d.~samples from $P(X) = N(x; \mu, \rho)$, then the expected update $J(v)$ of the learning rule \eqref{eq:learning-rule} is twice differentiable and such that
\[
  J(v) =
  2 \mathbf{E}\bigl[ \sigma_\beta(X-v) \cdot (X-v) \bigr] 
  \begin{cases}
  < 0, & \text{if }v > \mathbf{F}_\beta;\\
  = 0, & \text{if }v = \mathbf{F}_\beta;\\
  > 0, & \text{if }v < \mathbf{F}_\beta.
  \end{cases}
\]
\end{lemma}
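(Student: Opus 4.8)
The plan is to collapse the expected update $J(v)$ into a single integral whose integrand has a sign determined entirely by the position of $v$ relative to $\mathbf{F}_\beta$, so that all three cases (and twice-differentiability) fall out together. First I would dispose of the regularity claim: for each fixed $x$ the map $v \mapsto \sigma_\beta(x-v)(x-v)$ is smooth, and its first two $v$-derivatives are polynomials in $(x-v)$ multiplied by the bounded functions $\sigma_\beta$, $\sigma_\beta'$, $\sigma_\beta''$. Since $N(x;\mu,\rho)$ has finite moments of all orders, these derivatives are dominated locally uniformly in $v$ by Gaussian-integrable functions, so dominated convergence licenses differentiating under the integral sign twice; hence $J$ is twice differentiable.

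The heart of the argument is a reflection-plus-tilting reduction. Substituting $Z = X - v$ gives $Z \sim N(z; \mu-v, \rho)$ and $J(v) = 2\mathbf{E}[\sigma_\beta(Z)\,Z]$. Writing $m = \mu - v$ and letting $p_m$ denote this density, I would apply the change of variable $z \mapsto -z$ inside the integral and combine two elementary identities: the Gaussian tilting relation $p_m(-w) = e^{-2\rho m w}\,p_m(w)$, and the sigmoid relations $\sigma_\beta(-w) = 1 - \sigma_\beta(w)$ together with $(1-\sigma_\beta(w))\,e^{\beta w} = \sigma_\beta(w)$. Using the free-energy value $\mathbf{F}_\beta = \mu + \tfrac{\beta}{2\rho}$ from \eqref{eq:free-energy}, one checks $-2\rho m = \beta + 2\rho(v - \mathbf{F}_\beta)$, so the reflected integral equals $-\mathbf{E}[Z\,\sigma_\beta(Z)\,e^{2\rho(v-\mathbf{F}_\beta)Z}]$. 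Averaging this with the original representation yields the key formula
\[
  J(v) = \mathbf{E}_{Z \sim N(z;\mu-v,\rho)}\bigl[\, Z\,\sigma_\beta(Z)\,\bigl(1 - e^{2\rho(v-\mathbf{F}_\beta)Z}\bigr) \,\bigr].
\]

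Finally I would read off the sign. Setting $c = 2\rho(v - \mathbf{F}_\beta)$, the scalar map $z \mapsto z\,(1 - e^{cz})$ satisfies $z(1-e^{cz}) \le 0$ for every $z$ when $c > 0$ and $z(1-e^{cz}) \ge 0$ for every $z$ when $c < 0$, with equality only at $z = 0$; since $\sigma_\beta(z) > 0$ everywhere, the integrand inherits this sign pointwise, and because it is nonzero almost everywhere under the Gaussian the inequalities are strict. This gives $J(v) < 0$ for $v > \mathbf{F}_\beta$ and $J(v) > 0$ for $v < \mathbf{F}_\beta$, while $c = 0$ makes the integrand vanish identically and yields $J(\mathbf{F}_\beta) = 0$. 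I expect the main obstacle to be discovering the reflection-plus-tilting identity itself: attacking the problem directly through $J'(v)$ produces terms such as $\mathbf{E}[\sigma_\beta'(X-v)(X-v)]$ whose sign is not manifest, whereas the symmetrization is precisely what collapses $J$ into a single pointwise-signed integrand and simultaneously pins its unique root to $\mathbf{F}_\beta$.
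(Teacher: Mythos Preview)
Your proposal is correct and follows essentially the same route as the paper: both arguments rest on the reflection $z\mapsto -z$ about the current estimate, combining the Gaussian tilting identity $p_m(-z)=e^{-2\rho m z}p_m(z)$ with the sigmoid identity $\sigma_\beta(z)/\sigma_\beta(-z)=e^{\beta z}$ so that the expected update reduces to an integrand whose pointwise sign is governed solely by $v-\mathbf{F}_\beta$. The paper splits the domain at $v$ and compares the positive and negative lobes via their ratio on the half-line $[0,\infty)$, whereas you average the original and reflected representations over the whole line to obtain the closed form $J(v)=\mathbf{E}\bigl[Z\,\sigma_\beta(Z)\,(1-e^{2\rho(v-\mathbf{F}_\beta)Z})\bigr]$; these are equivalent rearrangements of the same computation.
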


\begin{proof}
The expected update of $v$ is
\begin{equation}\label{eq:expected-update}
  J(v) := 2 \int N(x; \mu, \rho) \sigma(x-v) (x-v) \, dx,
\end{equation}
where we have dropped the subscript~$\beta$ from $\sigma_\beta$ for simplicity. Using the Leibnitz integral rule it is easily seen that this function is twice differentiable w.r.t.~$v$, because the integrand is a product of twice differentiable functions.

\begin{figure}
    \centering
    \hspace*{-9pt}\includegraphics[width=1.04\textwidth]{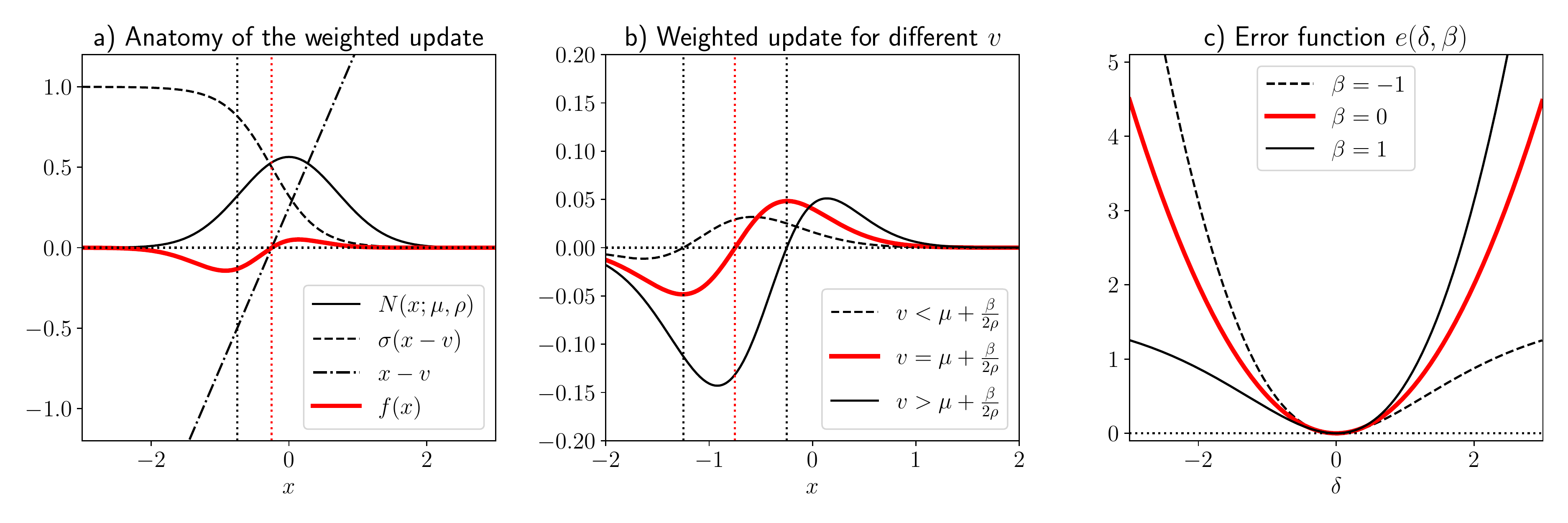}
    \caption{Update rule and its error function. \textbf{a)} shows the update~$f(x)$ to the estimate~$v$ caused by the arrival of a sample~$x$, weighted by its probability density. The expected update is determined by comparing the integrals of the positive and negative lobes. \textbf{b)} Illustration of weighted update functions $f(x)$ for different values of the current estimate $v$. The positive lobes are either larger, equal, or smaller than the negative lobes for a $v$ that is either smaller, equal, or larger than the free energy respectively. \textbf{c)} Error function implied by the update rule. For a risk-neutral ($\beta=0$) estimator the error function is equal to the quadratic error $e(\delta, 0) = \frac{1}{2}\delta^2$. For a risk-averse estimator ($\beta<0$), the error function is lopsided, penalizing under-estimates stronger than over-estimates. Furthermore, $e(\delta, \beta)$ is an even function in $\beta$.}
    \label{fig:update}
\end{figure}

The resulting update direction will be positive if the integral over the positive contributions outweight the negative contributions and vice versa. The integrand of \eqref{eq:expected-update} has a symmetry property: splitting the domain of integration~$\reals$ into $(-\infty; v]$ and $(v; +\infty)$, using the change of variable $\delta = x - v$, and recombining the two integrals into one gives
\begin{equation}\label{eq:update-symmetry}
  J(v) := 
  2 \int_0^\infty \Bigr\{
  N(v+\delta; \mu, \rho) \sigma(\delta) 
  - N(v-\delta; \mu, \rho) \sigma(-\delta) 
  \Bigr\} \delta \, d\delta.
\end{equation}
We will show that the integrand of \eqref{eq:update-symmetry} is either negative, zero, or positive, depending on the value of~$v$. Define the weighted update~$f(x)$ as
\[
  f(x) = f(v+\delta) :=
  N(v+\delta; \mu, \rho) \sigma(\delta) \delta.
\]
This function is illustrated in Figure~\ref{fig:update}a. We are interested in the ratio
\begin{equation}\label{eq:balance}
  \frac{f(v + \delta)}{f(v - \delta)} 
  = \frac{N(v+\delta; \mu, \rho)}{N(v-\delta; \mu, \rho)}
    \frac{\sigma(\delta)}{\sigma(-\delta)},
\end{equation}
which compares the positive against the negative contributions to the integrand in \eqref{eq:update-symmetry}. The first fraction of the r.h.s.\ of \eqref{eq:balance} is equal to
\[
  \frac{N(v+\delta; \mu, \delta)}{N(v-\delta; \mu, \rho)}
  = \exp\Bigl\{
    -\frac{\rho}{2}(v+\delta-\mu)^2
    +\frac{\rho}{2}(v-\delta-\mu)^2
  \Bigr\}
  = \exp\{-2\rho\delta(v-\mu)\}.
\]
Using the symmetry property $\sigma(\delta) = 1 - \sigma(-\delta)$ of the logistic sigmoid function, the second fraction can be shown to be equal to
\[
  \frac{\sigma(\delta)}{\sigma(-\delta)}
  = \frac{\sigma(\delta)}{1 - \sigma(\delta)}
  = \exp\{\beta \delta\}.
\]
Substituting the above back into \eqref{eq:balance} results in
\[
  \frac{f(v+\delta)}{f(v-\delta)}
  = \exp\{ -2\rho\delta(v-\mu) + \beta\delta \}
  \begin{cases}
    > 1 &\text{for }v < \mu+\frac{\beta}{2\rho}, \\
    = 1 &\text{for }v = \mu+\frac{\beta}{2\rho}, \\
    < 1 &\text{for }v > \mu+\frac{\beta}{2\rho},
  \end{cases}
\]
also illustrated in Figure~\ref{fig:update}b. Therefore, the integrand in \eqref{eq:update-symmetry} is either positive ($v < \mu+\frac{2}{2\rho}$), zero ($v = \mu+\frac{\beta}{2\rho}$), or negative ($v > \mu+\frac{\beta}{2\rho}$), allowing to conclude the claim of the lemma.
\end{proof}

\section{Additional Properties}\label{sec:properties}

We discuss additional properties in order to strengthen the intuition and to clarify the significance of the learning rule; some practical implementation advice is given at the end.

\paragraph{Associated free energy functional.} The Gaussian free energy $\mathbf{F}_\beta$ in~\eqref{eq:free-energy} is formally related to the valuation of risk-sensitive portfolios used in finance \citep{markowitz1952portfolio}. It is well-known that the free energy is the extremum of the \emph{free energy functional}, defined as the Kullback-Leibler-regularized expectation of $X$:
\begin{equation}\label{eq:free-energy-functional}
  F_\beta\bigl[p(x)\bigr] :=  
    \mathbf{E}_p[X] 
    - \frac{1}{\beta} 
    \text{KL}\bigl(p(x) \bigl\| N(x; \mu, \rho)\bigr).
\end{equation}
This functional is convex in $p$ for $\beta<0$ and concave for $\beta>0$. Taking either the minimum (for $\beta<0)$ or maximum (for $\beta>0$) w.r.t.\ $p(x)$ yields
\begin{equation}\label{eq:extremum}
  \mathbf{F}_\beta
  = \underset{p(x)}{\mathrm{extr}}\, F_\beta\bigl[p(x)\bigr]
  = \Bigl[ \mu + \frac{\beta}{\rho} \Bigr]
    - \frac{1}{\beta} \Bigl[ \frac{\beta^2}{2\rho} \Bigr]
  = \mu + \frac{\beta}{2\rho}
  = \mathbf{E}[X] + \frac{\beta}{2} \mathbf{Var}[X],
\end{equation}
that is, the Gaussian free energy is a linear function of~$\beta$, where the intercept and the slope are equal to the expectation and half of the variance of $X$ respectively. The extremizer $p^\ast(x)$ is the Gaussian
\begin{equation}\label{eq:extremizer}
  p^\ast(x) 
  = \arg \underset{p(x)}{\mathrm{extr}}\, F_\beta\bigl[p(x)\bigr]
  = N(x; \mu + \tfrac{\beta}{\rho}, \rho).
\end{equation}
The above gives a precise meaning to the free energy as a certainty-equivalent. The choice of a non-zero inverse temperature~$\beta$ reflects a distrust in the reference probability density $N(x; \mu, \rho)$ as a reliable model for~$X$. Specifically, the magnitude of $\beta$ quantifies the degree of distrust and the sign of $\beta$ indicates whether it is an under- or overestimation. This distrust results in using the extremizer~\eqref{eq:extremizer} as a robust substitute for the original reference model for $X$. 

\paragraph{Game-theoretic interpretation.} In addition to the above, previous work \citep{ortega2014adversarial, eysenbach2019maxent, husain2021regularized} has shown that the free energy functional has an interpretation as a two-player game which characterizes its robustness properties. Following \citet{ortega2014adversarial}, computing the Legendre-Fenchel dual of the KL regularizer yields an equivalent adversarial re-statement of the free energy functional~\eqref{eq:free-energy-functional}, which for $\beta > 0$ is given by
\begin{equation}\label{eq:dual}
  \max_{p(x)} \min_{c(x)} \Bigl\{
  \int p(x) [x - c(x)] \, dx
  + \int N(x; \mu, \rho) \exp\{ \beta c(x) \} \, dx,
  \Bigr\},
\end{equation}
where the perturbations~$c(x) \in \reals$ are chosen by an adversary (Note: for the case $\beta<0$ one obtains a Minimax problem over $p(x)$ and $c(x)$ rather than a Maximin). From this dual interpretation, one sees that the distribution $p(x)$ is chosen as if it were maximizing the expected value of $x' = x-c(x)$, the adversarially perturbed version of~$x$. In turn, the adversary attempts to minimize $x'$, but at the cost of an exponential penalty for~$c(x)$. More precisely, given the distribution~$p(x)$, the adversarial best-response (ignoring constants) is
\begin{equation}\label{eq:adv-best-resp}
  c^\ast(x) 
  \stackrel{\text{(a)}}{=} \frac{1}{\beta} \log \frac{p(x)}{N(x; \mu, \rho)}
  \stackrel{\text{(b)}}{=} \frac{1}{2\beta}\Bigl\{ 
    \rho(x-\mu)^2 - \bar\rho(x-\bar\mu)^2
    + \log\frac{\bar\rho}{\rho}
  \Bigr\}
  \stackrel{\text{(c)}}{=} x - \mathbf{F}_\beta,
\end{equation}
where the equality (a) is true for any choice of $p(x)$; (b) holds if $p(x) = N(x; \bar\mu, \bar\rho)$ for some mean~$\bar{\mu}$ and precision~$\bar{\rho}$; and where (c) holds if $p(x)$ is the extremizer~\eqref{eq:extremizer}. Here we see that the adversarial perturbations can be arbitrarily bad if $p(x)$ is not chosen cautiously: for instance, for the (Gaussian) Dirac delta
\begin{equation}\label{eq:worst-case-perturbation}
  p(x) = N(x; \mu, \bar{\rho}) 
    \xrightarrow{\bar{\rho}\rightarrow \infty}
    \delta(x=\mu)
  \quad\text{we get}\quad
  c^\ast(x) = \mathcal{O}\Bigl(\log\frac{\bar{\rho}}{\rho}\Bigr)
    \xrightarrow{\bar{\rho}\rightarrow \infty} 
    + \infty.
\end{equation}

\paragraph{Error function.} Let $\delta = x - v$ be the instantaneous difference between the sample and the estimate. If the update rule~\eqref{eq:learning-rule} corresponds to a stochastic gradient descent step, then what is the error function? That is, if 
\[
  v \leftarrow v - \alpha \cdot \nabla_\delta e(\delta, \beta)
  = v + 2\alpha\cdot \sigma_\beta(\delta)\cdot\delta,
\]
then what is $e(\delta, \beta)$? Integrating the gradient $\nabla_\delta e(\delta, \beta)$ with respect to $\delta$ gives
\begin{equation}\label{eq:error}
  e(\delta, \beta) 
  = 2 \int \sigma(\delta)\delta \, d\delta
  = \frac{2\delta}{\beta} \log(1 + \exp\{\beta\delta\})
    + \frac{2}{\beta^2}\mathrm{li}_2(-\exp\{\beta\delta\})
    + \frac{\pi^2}{6\beta^2},
\end{equation}
where $\log(1 + \exp(z))$ is the \emph{softplus function} \citep{dugas2001incorporating} and $\mathrm{li}_2(z)$ is \emph{Spence's function} (or dilogarithm) defined as
\[
  \mathrm{li}_2(z) = -\int_0^z \frac{\log(1-z)}{z} \, dz,
\]
and where the constant of integration $\frac{\pi^2}{6\beta^2}$ was chosen so that $\lim_{\delta\rightarrow 0} e(\delta, \beta) = 0$ for all $\beta \in \reals$. This error function is illustrated in Figure~\ref{fig:update}c for a handful of values of $\beta$. In the limit $\beta \rightarrow 0$, the error function becomes:
\[
  \lim_{\beta \rightarrow 0} e(\delta, \beta)
  = \frac{1}{2}\delta^2,
\]
thus establishing a connection between the quadratic error and the proposed learning rule.

\paragraph{Practical considerations.}
The free energy learning rule \eqref{eq:learning-rule} can be implemented as stated, for instance either using constant learning rate $\alpha > 0$ or using an adaptive learning rate $\alpha_t > 0$ fulfilling the Robbins-Monro conditions $\sum_t \alpha_t > 0$ and $\sum_t \alpha_t^2 < \infty$.

A problem arises when most of the data falls within the near-zero saturated region of the sigmoid, which can occur due to an unfortunate initialization of the estimate~$v$. Since then $\sigma_\beta(x-v) \approx 0$ for most~$x$, learning can be very slow. This problem can be mitigated using an affine transformation of the sigmoid that gaurantees a minimal rate $\eta>0$, such as
\begin{equation}\label{eq:trans-sigmoid}
  \tilde{\sigma}_\beta(z) = \eta + (1-2\eta) \sigma_\beta(z),
\end{equation}
which re-scales the sigmoid within the interval $[\eta, 1-\eta]$. We have found this adjustment to work well for $|\beta| \approx 0$, especially when it is only used during the first few iterations.

If one wishes to use the learning rule in combination with gradient-based optimization (as is typical in a deep learning architecture), we do not recommend using the error function~\eqref{eq:error} directly. Rather, we suggest absorbing the factor $2\tilde{\sigma}_\beta(\delta)$ directly into the learning rate (where as before, $\delta=x-v$). A simple way to achieve this consists in scaling the estimation error $E(\delta)$ by said factor using a stop-gradient, that is,
\begin{equation}\label{eq:stop-gradient}
  \tilde{E}(\delta) 
  := \text{StopGrad}(2\tilde{\sigma}_\beta(\delta))\cdot E(\delta),
\end{equation}
since then the error gradient with respect to the model parameters~$\theta$ will be
\begin{equation}\label{eq:grad-stop-gradient}
  \nabla_\theta \tilde{E}(\delta) = -2\tilde{\sigma}_\beta(\delta)\cdot
    \frac{\partial E}{\partial \delta}\frac{\partial v}{\partial \theta}.
\end{equation}

Finally, a large $|\beta|$ chooses a target free energy within a tail of the distribution, leading to slower convergence. If one wishes to approximate a free energy that sits at~$n$ standard deviations from the mean, then $\beta$ should be chosen as
\begin{equation}\label{eq:beta-rho}
  \beta(n) = 2n\sqrt{\rho}.
\end{equation}
However, since $\beta(n)$ is not scale invariant and the scale $\rho$ is unknown, a good choice of $\beta$ must be determined empirically. 

\section{Experiments}\label{sec:experiments}

\paragraph{Estimation.} Our first experiment is a simple sanity check. We estimated the free energy in an online manner using the learning rule~\eqref{eq:learning-rule} from data generated by two i.i.d.\ sources: a standard Gaussian, and uniform distribution over the interval~$[-2, 2]$. Five different inverse temperatures were used ($\beta \in \{-4, -2, 0, 2, 4\})$. For each condition, we ran ten estimation processes from 4000 random samples using the same starting point ($v=1.5$). The learning rate was constant and equal to $\alpha = 0.02$.

\begin{figure}
    \centering
    \hspace*{-12pt}
    \includegraphics[width=1.05\textwidth]{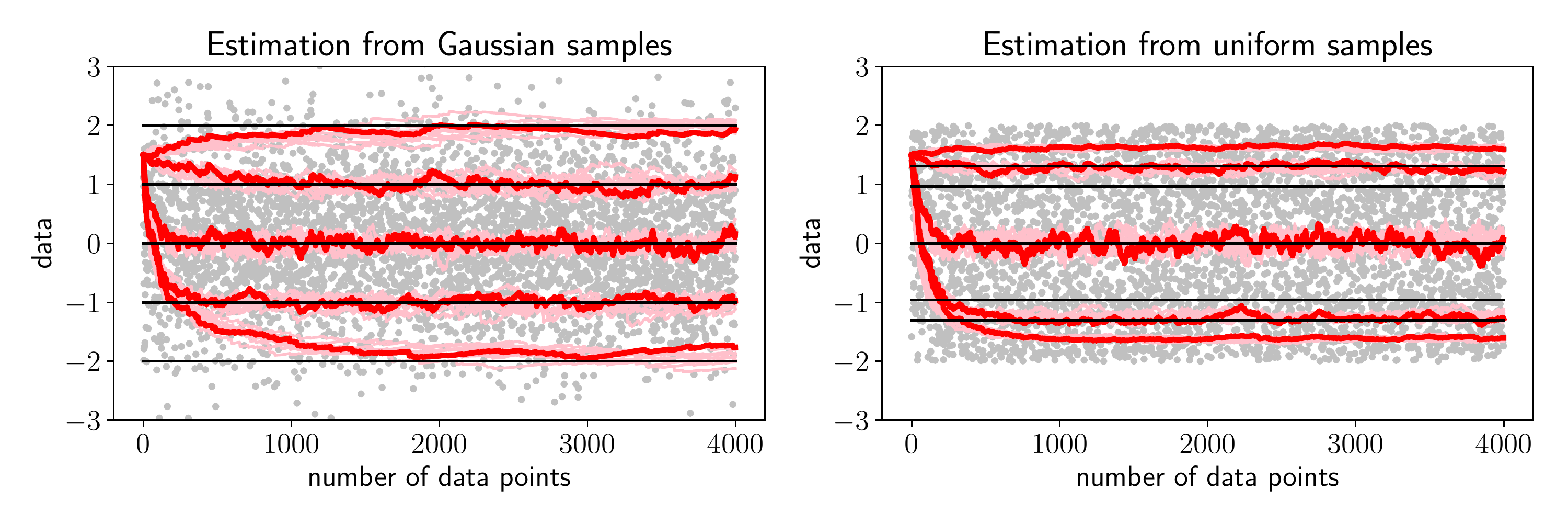}
    \caption{Estimation of the free energy from Gaussian (left panel) and uniform samples (right panel). Each plot shows 10 estimation processes (9~in pink, 1~in red) per choice of the inverse temperature, where $\beta \in \{-4, -2, 0, 2, 4\}$. The true free energies are shown in black. The estimation of the free energy is accurate for Gaussian data but biased for uniform data.}
    \label{fig:estimation}
\end{figure}

The results are shown in figure~\ref{fig:estimation}. In the Gaussian case, the estimation processes successfully stabilize around the true free energies, with processes having larger $|\beta|$ converging slower, but fluctuating less. In the uniform case, the estimation processes do not settle around the correct free energy values for $\beta \not= 0$; however, the found solutions increase monotonically with $\beta$. These results validate the estimation method only for Gaussian data, as expected.

\paragraph{Reinforcement learning.}
Next we applied the risk-sensitive learning rule to RL in a simple grid-world. The goal was to qualitatively investigate the types of policies that result from different risk-sensitivities. Shown in Figure~\ref{fig:pycoworld-result}a, the objective of the agent is to navigate to a terminal state containing a reward pill within no more than~25 time steps while avoiding the water. The reward pill delivers one reward point upon collection, whereas standing in the water penalizes the agent with minus one reward point per time step. In addition, there is a very strong wind: with 50\% chance in each step, the wind pushes the agent one block in a randomly chosen cardinal direction.

\begin{figure}
    \centering
    \hspace*{-2pt}
    \includegraphics[width=1.01\textwidth]{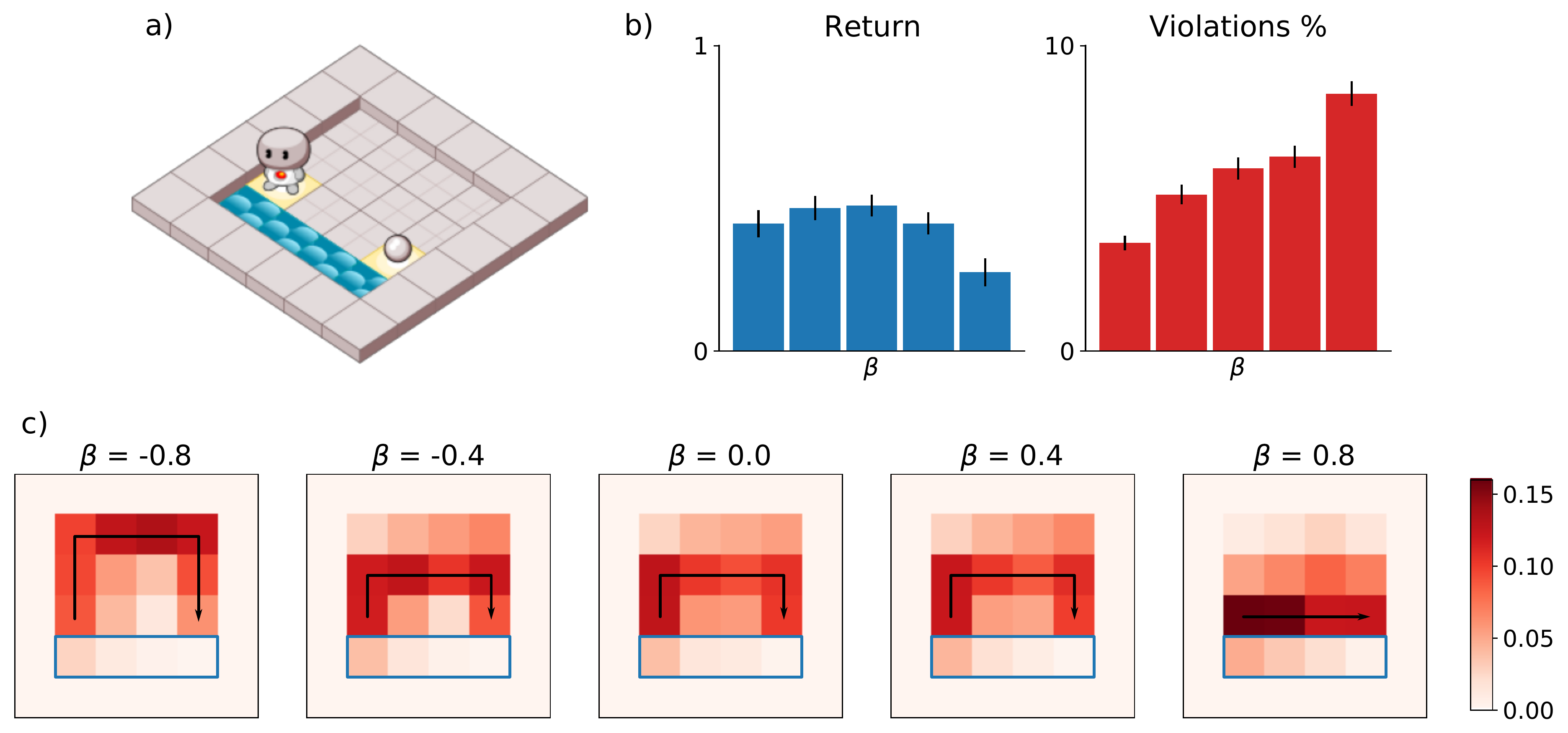}
    \caption{Comparison of risk-sensitive RL agents. \textbf{a)} The task consists in picking up a reward located at the terminal state while avoiding stepping into water. A strong wind pushes the agent into a random direction 50\% of the time. \textbf{b)} Bar plots showing the average return (blue) and the percentage of violations (red) for each policy, ordered from lowest to highest $\beta$. \textbf{c)}~State visitation frequencies for each policy, plus the optimal (deterministic) policy when there is no wind (black paths).}
    \label{fig:pycoworld-result}
\end{figure}

We trained R2D2 \citep{kapturowski2018recurrent} agents with the risk-sensitive cost function~\eqref{eq:stop-gradient} using five uniformly spaced inverse temperatures $\beta$ ranging from~$-0.8$ to~$0.8$. The architecture of our agents consisted of a first convolutional layer with $3$-by-$3$-kernels and 128 channels, a dense layer with~$128$ units, and a logit layer for the four possible actions (i.e.\ walking directions). The discount factor was set to $\gamma = 0.95$. Each agent was trained for 500K iterations with a batch size of 64, using the Adam optimizer with learning rate $10^{-4}$ \citep{kingma2014adam}. The target network was updated every 400 steps. The inputs to the network were observation tensors of binary features representing the 2D board. Note these agents didn't use any recurrent cells and therefore no backpropagation through time was used. To train all the agents in this experiment we used 154 CPU core hours at 2.4 GHz and 22.5 GPU hours.

To analyze the resulting policies, we computed the episodic returns and the percentage of time the agents spent in the water (i.e.\ the ``violations'') from 1000 roll-outs. The results, shown in Figure~\ref{fig:pycoworld-result}b, reveal that the risk-neutral policy ($\beta=0$) has the highest average return. However, the percentage of violations increases monotonically with~$\beta$. Figure~\ref{fig:pycoworld-result}c shows the state-visitation probabilities estimated from the same roll-outs. There are essentially three types of policies: risk-averse, taking the longest path away from the water; risk-neutral, taking middle path; and risk-seeking, taking the shortest route right next to the water. These are even more crisply revealed when the wind is de-activated. Interestingly, the risk-averse policy ($\beta=-0.8$) does not always reach the goal, which explains why its return is slightly lower in spite of committing fewer violations.

\paragraph{Bandits.} In the last experiment we wanted to observe the premiums that risk-sensitive agents are willing to pay when confronted with a choice between a certain and a risky option. To do so, we used a two-arm bandit setup, where one arm (``certain'') delivered a fixed reward and the other arm (``risky'') a stochastic one---more precisely, drawn from a Gaussian distribution with mean $\mu$ and precision $\rho=2$. Both the fixed payoff and the mean~$\mu$ of the risky arm were drawn from a standard Gaussian distribution at the beginning of an episode, which lasted twenty rounds. To build agents that can trade off exploration versus exploitation, we used memory-based meta-learning \citep{wang2016learning, santoro2016meta}, which is known to produce near-optimal bandit players \citep{ortega2019meta, mikulik2020meta}.

We meta-trained  five R2D2 agents using risk-sensitives $\beta \in \{-1.0, -0.5, 0, 0.5, 1.0\}$ on the two-armed bandit task distribution (also randomizing the certain/risky arm positions) with discount factor $\gamma = 0.95$. The network architecture and training parameters were as in the previous RL experiment, with the difference that the initial convolutional layer was replaced with a dense layer and an LSTM layer having 128 memory cells \citep{hochreiter1997long}. We used backpropagation through time for computing the episode gradients. The input to the network consisted of the action taken and reward obtained in the previous step. This setup allows agents to adapt their choices to past interactions throughout an episode. To train all the agents in this experiment we used 88 CPU core hours at 2.4 GHz and 10 GPU hours.

\begin{figure}
    \centering
    \hspace*{-4pt}
    \includegraphics[width=1.01\textwidth]{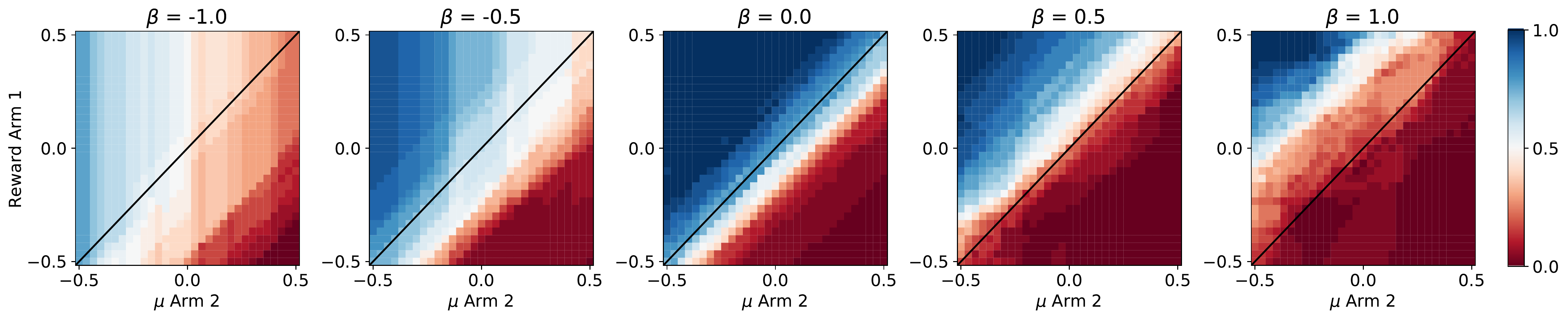}
    \caption{Two-armed bandit policy profiles with different risk-sensitivities~$\beta$. The certain arm~1 pays a deterministic reward, while the risky~arm~2 pays a stochastic reward drawn from $N(r; \mu, \rho)$ with precision~$\rho=2$. The agents were meta-trained on bandits where the payoffs (i.e.\ arm 1's payoff and arm 2's mean) were drawn from a standard Gaussian distribution. The plots show the marginal probability of choosing the certain arm (blue) over the risky arm (red) after twenty interactions for every payoff combination. Each point in the uniform grid was estimated from 30 seeds. Note the deviations from the true risk-neutral indifference curve (black diagonal).}
    \label{fig:bandit-result}
\end{figure}

Figure~\ref{fig:bandit-result} shows the agents' choice profile in the last ($20^\mathrm{th}$) time step. A true risk-neutral agent does not distinguish between a certain and risky option that have the same expected payoff (black diagonal). The main finding is that the indifference region (i.e.\ close to a 50\% choice in white color) evolves significantly with increasing $\beta$, implying that the agents with different risk attitudes are indeed willing to pay different risk premia (measured as the vertical distance of the indifference region from the diagonal). We observe two effects. The most salient effect is that the indifference region mostly moves from being beneath (risk-averse) to above (risk-seeking) the true risk-neutral indifference curve as $\beta$ increases. The second effect is that risk-averse policies ($\beta = -1$ and~$-0.5$) contain a large region of a stochastic choice profile that appears to depend only on the risky arm's parameter. We do not have a clear explanation for this effect. Our hypothesis is that risk-averse policies assume adversarial environments, which require playing mixed strategies with precise probabilities. Finally, the risk-neutral agent ($\beta=0$) appears to be slightly risk-averse. We believe that this effect arises due to the noisy exploration policy employed during training.

\section{Discussion}\label{sec:discussion}

\paragraph{Summary of contributions.} In this work we have introduced a learning rule for the online estimation of the Gaussian free energy with unknown mean and precision/variance. The learning rule~\eqref{eq:learning-rule} is obtained by reinterpreting the stimulus-presence indicator component of the Rescorla-Wagner rule \citep{rescorla1972theory} as a (soft) indicator function for the event of either over- or underestimating the target value. In Lemma~\ref{lemma:equilibrium} we have shown that the free energy is the unique and stable fixed point of the expected learning dynamics. This is the main contribution.

Furthermore, we have shown how to use the learning rule for risk-sensitive RL. Since the free energy implements certainty-equivalents that range from risk-averse to risk-seeking, we were able to formulate a risk-sensitive, model-free update in the spirit of TD(0) \citep{sutton1990time}, thereby addressing a longstanding problem \citep{mihatsch2002risk} for the special case of the Gaussian distribution. Due to its simplicity, the rule is easy to incorporate into existing deep RL algorithms, for instance by modifying the error using a stop-gradient as shown in~\eqref{eq:stop-gradient}. In Section~\ref{sec:properties} we also elaborated on the role of the free energy within decision-making, pointing out its robustness properties and adversarial interpretation.  

We also demonstrated the learning rule in experiments. Firstly, we empirically confirmed that the online estimates stabilize around the correct Gaussian free energies (Section~\ref{sec:experiments}--Estimation). Secondly, we showed how incorporating risk-attitudes into deep RL can lead to agents implementing qualitatively different policies which intuitively make sense (Section~\ref{sec:experiments}--RL). Lastly, we inspected the premia risk-sensitive agents are willing to pay for choosing a risky over a certain option, finding that agents have choice patterns that are more complex than we had anticipated (Section~\ref{sec:experiments}--Bandits).

\paragraph{Limitations.}
As shown empirically in Section~\ref{sec:experiments}--Estimation, an important limitation of the learning rule is that its fixed point is only equal to the free energy when the samples are Gaussian (or approximately Gaussian, as justified by the CLT). Nevertheless, agents using the risk-sensitive TD(0) update~\eqref{eq:rs-td-learning} still display risk attitudes monotonic in~$\beta$, with $\beta=0$ reducing to the familiar risk-neutral case.

While Lemma~\ref{lemma:equilibrium} establishes the stable equilibrium of the expected update, it only guarantees convergence in continuous-time updates. To show convergence using discrete-time point samples, a stronger result is required. In particular, we conjecture that
\begin{equation}\label{eq:conjecture}
   \bigl|J(v)\bigr| 
   = 2\biggl| \int N(x; \mu, \rho) \sigma_\beta(x-v) (x-v) \, dx \biggr|
   \leq 2 \bigl| \mathbf{F}_\beta - v \bigr|
\end{equation}
If~\eqref{eq:conjecture} is true, meaning that $J(v)$ is 2-Lipschitz, then this could be combined with a result in stochastic approximation theory akin to Theorem~1 in \citet{jaakkola1994convergence} to prove convergence.

A shortcoming of our experiments using R2D2 agents is that they deterministically pick actions that maximize the Q-value. However, risk-averse agents see their environments as being adversarial, and these in turn require stochastic policies in order to achieve optimal performance.

\paragraph{Conclusions.}
Because it is impossible to anticipate the many ways in which a dynamically-changing environment will violate prior assumptions, requiring the robustness of ML algorithms is of vital importance for their deployment in real-world applications. Unforeseen events can render their decisions unreliable---and in some cases even unsafe.

Our work makes a small but nonetheless significant contribution to risk-sensitivity in ML. In essence, it suggests a minor modification to existing algorithms, biasing valuation estimates in a risk-sensitive manner. In particular, we expect the risk-sensitive TD(0)-learning rule to become an integral part of future deep RL algorithms.

\comment{
\section*{Acknowledgments}
}

\bibliography{bibliography}

\end{document}